\title[Information-Theoretic Generalization Bounds for SGD]{Information-Theoretic Generalization Bounds
\\for Stochastic Gradient Descent}
\newcommand{\alg}{\mathcal{A}}
\newcommand{\bg}{\overline{g}}
\newcommand{\dd}{\mathrm{d}}
  \newcommand{\Zw}{\mathcal{Z}}
\newcommand{\N}{\mathcal{N}}
\newcommand{\real}{\mathbb{R}}
\newcommand{\Sw}{\mathcal{S}}
\newcommand{\DD}[2]{\mathcal{D}\pa{#1\middle\|#2}}
\newcommand{\EE}[1]{\mathbb{E}\left[#1\right]}
\newcommand{\bEE}[1]{\mathbb{E}\bigl[#1\bigr]}
\newcommand{\EEs}[2]{\mathbb{E}_{#2}\left[#1\right]}
\newcommand{\EEcc}[2]{\mathbb{E}\left[\left.#1\right|#2\right]}
\newcommand{\iprod}[2]{\left\langle#1,#2\right\rangle}
\newcommand{\norm}[1]{\left\|#1\right\|}
\newcommand{\twonorm}[1]{\norm{#1}}
\newcommand{\ev}[1]{\left\{#1\right\}}
\newcommand{\pa}[1]{\left(#1\right)}
\newcommand{\wt}{\widetilde}
\newcommand{\bsigma}{\bm{\sigma}}
\newcommand{\bSigma}{\bm{\Sigma}}
\newcommand{\tQ}{\wt{Q}}
\newcommand{\tW}{\wt{W}}
\newcommand{\tw}{\wt{w}}
\newcommand{\gen}{\textup{gen}}
\newcommand{\qed}{\hfill\BlackBox\\[2mm]}
\begin{document}

\maketitle

\begin{abstract}
We study the generalization properties of the popular stochastic optimization method known as stochastic gradient 
descent (SGD) for optimizing general non-convex loss functions. Our main contribution is providing upper bounds on the 
generalization error that depend on local statistics of the stochastic gradients evaluated along the path of iterates 
calculated by SGD. The key factors our bounds depend on are the variance of the gradients (with respect to the data 
distribution) and the local smoothness of the objective function along the SGD path, and the sensitivity of the loss 
function to perturbations to the final output. Our key technical tool is combining the information-theoretic 
generalization bounds previously used for analyzing randomized variants of SGD with a perturbation analysis of the 
iterates.
\end{abstract}

\begin{keywords}%
  stochastic gradient descent, generalization, information-theoretic generalization
\end{keywords}

\section{Introduction}\label{sec:intro}
Stochastic gradient descent (SGD) is arguably the single most important algorithmic component of the modern 
machine-learning toolbox. First proposed by \citet{RM51} for finding roots of a function using noisy evaluations, 
stochastic approximation methods like SGD has been broadly adapted for a variety of tasks in signal processing, control 
and optimization \citep{KY97,NJLS09}. In the 
context of machine learning, SGD is extremely popular due to its efficient use of computation that naturally enables it 
to process very large data sets and its inherent ability to handle noisy data \citep{BoBo07}. In modern 
machine learning, SGD is the de facto standard method for training deep neural networks, partly due to 
the efficient computability of the gradients through the famous backpropagation algorithm 
\citep{rumelhart86,lecun2012efficient}. Given the surprising effectiveness of SGD for deep learning, several empirical 
and theoretical studies have attempted to explain the reasons of its success. In recent years, this research 
effort has lead to some remarkable results that finally shed some light on the core factors contributing to the 
effectiveness of SGD. A handful of examples include showing guaranteed convergence to minimizers \citep{GHJY15,LSJR16}, 
to minimum-norm solutions under various assumptions \citep{MBB18,JGH18,OS19}, and even to global optima of 
overparametrized neural networks \citep{DZPS18,ALS19}. 

This paper aims to contribute to a better understanding of the \emph{generalization properties of SGD} for optimizing 
general non-convex objectives.
This is a widely studied question, mostly from the perspective of \emph{stability}, inspired by the understanding that 
stable learning algorithms are guaranteed to generalize well to test data 
\citep{bousquet2002stability,feldman2019high,bousquet2020sharper}. This line of work was initiated by \citet*{HRS16}, 
who showed that SGD has strong stability properties when applied to smooth convex loss functions, and is particularly 
stable when the objective is also strongly convex. They also provided bounds on the generalization error for non-convex 
losses, with a rate that is polynomial in the number of steps $T$, with an exponent that depends on the smoothness of 
the objective. A significant limitation of their results is that they require the loss function to have a bounded 
Lipschitz constant, which is generally difficult to ensure (especially when training deep neural networks). In recent 
years, the results of \citeauthor{HRS16} have been strengthened in a variety of ways, for example by removing the 
Lipschitz condition in the general non-convex case by \citet{LY20} and by removing the smoothness condition in the 
convex case by \citet{BFGT20}.

Another line of work studies the generalization properties of a randomized version of SGD called 
\emph{stochastic gradient Langevin dynamics} (SGLD, cf.~\citealp{WT11}). As shown by \citet{RRT17}, SGLD has strong 
finite-sample convergence properties for general non-convex learning, which already implies good generalization.
Another line of attack aiming to directly understand the generalization of SGLD was initiated by \citet*{PJL18}. 
Subsequent  improvements to this technique were made by \citet{NHDKR19}, \citet{HNKRD20} and \citet{RBTS21}, who prove 
data-dependent generalization bounds that do not depend on the Lipschitz constant of the loss function. The core 
technical tool of these analyses is bounding the generalization error in terms of the mutual information between inputs 
and outputs of learning algorithms, previously proposed in a much more general context by \citet{RZ16,RZ19} and 
\citet{XR17}, which are themselves closely connected to classic PAC-Bayesian generalization error bounds 
\citep{McA99,McA13}. Indeed, these information-theoretic tools are particularly suitable for analyzing SGD-like 
algorithms due to a convenient decomposition of the mutual infomation across iterations by an application of the chain 
rule of the relative  entropy. The obvious downside of this technique is that it relies on randomly perturbing the SGD 
iterates, which empirically hurts the performance of the algorithm and results in underfitting the training loss.

Our main contribution in this paper is demonstrating that it is possible to conduct an information-theoretic analysis 
for the vanilla version of SGD, without directly perturbing the iterates. Our key observation is that it is sufficient 
to add carefully constructed random noise to the iterates \emph{only during the analysis}, which then allows using the 
technique of \citet{PJL18} on these ``virtual SGLD'' iterates. The key challenge in the analysis is designing 
the perturbations and correctly handling their propagation through the iterations. The bounds we derive depend on three 
factors: the variance of the gradient updates and the sensitivity of the gradients to perturbations along the path, and 
the sensitivity of the final output to perturbations. All of these quantities are evaluated locally along the 
iterates that SGD produces. Another important consequence of our virtual perturbation technique is that our bounds hold 
simultaneously for all noise distributions, which obviates the need to tune the hyperparameters of the algorithm to 
optimize the bound.

As is probably obvious from the above discussion, our approach is thoroughly inspired by the insightful work of 
\citet*{PJL18}, which itself is inspired by the seminal works of \citet{RZ16,RZ19} and \citet{XR17} on 
information-theoretic generalization bounds. In recent years, their theory has been extended in a variety of directions 
that resulted in tighter and tighter bounds. Here, we highlight the works of \citet{AAV18,AA20} whose chaining 
technique 
can lead to tighter bounds when applied to neural networks, and the work of \citet{SZ20} whose key idea of 
appropriately 
conditioning the mutual information enables proving more refined data-dependent guarantees. In fact, these latter ideas 
directly inspired the work of \citet{HNKRD20} and \citet{RBTS21} on tighter generalization bounds for Langevin dynamics 
and SGLD, respectively. We conjecture that our analysis can be also refined by using such sophisticated 
information-theoretic techniques. Finally, we also mention that our 
main idea of using random perturbations to guarantee boundedness of the mutual information has been partially inspired 
by the work of \citet{ZHBHB20}.

Besides the works mentioned above, several other theories of generalization have been proposed in the deep learning 
literature. One particularly widely held belief is that algorithms that find ``wide optima'' of the loss landscape 
generalize well to test data. This hypothesis has been first proposed by \citet{HS97} and later popularized by 
\citet{KNTMS17}, and has attracted some (moderately rigorous) verification and refutation attempts (e.g., 
\citealp{DPBB17,IPGVW18,HHY19,chaudhari2019entropy}). One common criticism of this theory is that the ``width'' of a 
solution is difficult to formally define, and the most common intuitive definitions suffer from being sensitive 
to reparametrization. While we don't claim to resolve the debate around ``wide optima'', our 
results lend some minimal credence to this theory in that our generalization bounds indeed predict an improvement when 
the loss of the final solution is insensitive to perturbations. As we will show, our bounds allow measuring this 
sensitivity in terms of arbitrary coordinate systems, which at least addresses the most elementary concerns with 
parametrization-sensitivity of common definitions. That said, some aspects of our results defy common wisdom: most 
notably, our bounds show an \emph{improvement} for larger batch sizes, even though this choice empirically leads to 
worse performance, purportedly due to converging to ``sharper minima''. Importantly, our analysis does \emph{not} 
explain why SGD would converge towards solutions that generalize well and how hyperparameters like the batch size could 
impact the quality of solutions.

\paragraph{Notation.} For two distributions $P$ and $Q$ satisfying $P\ll Q$, we denote their relative entropy by 
$\DD{P}{Q} = \int_x \dd P(x) \log \pa{\frac{\dd P}{\dd Q}(x)}$. The distribution of a random variable $X$ will be 
denoted by $P_X$ 
and the product distribution between $P_X$ and $P_Y$ will be denoted by $P_X\otimes P_Y$. With this notation, the 
mutual information between two random variables is defined as $I(X;Y) = \DD{P_{X,Y}}{P_X\otimes P_Y}$.
Whenever possible, we use capital letters to denote random variables and use lowercase letters to denote their 
realizations. We use $\twonorm{\cdot}$ to denote the Euclidean norm on $\real^d$ 
and $\Sw_+$ to denote the set of symmetric positive definite matrices in $\real^{d\times d}$. For any $u\in\real^d$ and 
$\Sigma\in\Sw_+$, we will use $\N(u,\Sigma)$ to denote the multivariate normal distribution with mean $u$ and 
covariance $\Sigma$.

\section{Background}
We let $S=\ev{Z_1,Z_2,\dots,Z_n}$ denote a data set of $n$ i.i.d.~samples taking value in the set $\Zw$ and $W = 
\alg(S)$ be the output of a (potentially randomized) learning algorithm run on data set $S$. We assume that this output 
is a $d$-dimensional real-valued vector, representing parameters of a potentially nonlinear model such as a neural 
network. The performance of a learning algorithm is evaluated in terms of a loss function mapping data points and 
parameter vectors to positive real numbers, with $\ell(w,z)$ giving the loss of the model with parameters $w\in\real^d$ 
evaluated on data point $z\in\Zw$. 
We will assume throughout that $\ell(w,z)$ is differentiable with respect to $w$ for all $z$, and let $g(w,z)$ 
denote its gradient evaluated at $w$. Furthermore, we will assume that the distribution of $Z\sim Z_1$ is such that 
$\ell(w,Z)$ is $R$-subgaussian for any $w\in\real^d$ in the sense that the inequality $\EE{\exp(y\pa{\ell(w,Z) - 
\EE{\ell(w,Z)}})} \le \exp(R^2 y^2/2)$ is satisfied for all $y\in\real$. This condition clearly holds if the 
loss function is bounded on the support of the data distribution. 
Letting $S'=\ev{Z_1',Z_2',\dots,Z_n'}$ be an independent data set of the same distribution as $S$, and denoting the 
average loss of $w$ on data set $s=\ev{z_1,z_2,\dots,z_n}$ by $L(w,s) = \frac 1n \sum_{i=1}^n \ell(w,z_i)$, we define 
our main object of interest, the \emph{expected generalization error} of algorithm $\alg$ on the data set $S$ as
\[
 \gen(W,S) = \EE{L(W,S') - L(W,S)}.
\]
The expected generalization error (that we will often simply call \emph{generalization error}) measures the difference 
between the training loss $L(W,S)$ and the test loss $\EEcc{L(W,S')}{W}$ on expectation with respect to the randomness 
of the data set $S$ and the output $W$.
Our techniques will be based on the now-classic results of \citet{RZ16,XR17} that show that, under the 
conditions stated above, the generalization error of any learning algorithm can be bounded as
\begin{equation}\label{eq:gen}
 \left|\gen(W,S)\right| \le \sqrt{\frac{2 R^2 I(W;S)}{n}},
\end{equation}
where $I(W;S)$ denotes the mutual information between the random variables $W$ and $S$.

We will consider the classic stochastic gradient descent (SGD) algorithm originally proposed by \citet{RM51} and 
applied here to approximately minimize the empirical loss $L(w,S)$ in terms of $w$ on the data set $S$. This iterative 
algorithm operates by making sequential updates to a parameter vector in the direction opposite to the gradient of the 
loss function evaluated at a minibatch of data points. Such a minibatch estimator will be denoted with a slight abuse 
of 
notation as $g(w,Z_J)= \frac {1}{|J|} \sum_{i\in J} g(w, Z_i)$, where $J\subseteq[n]$ is a set of indices.
Then, the iterates of SGD are given by drawing $W_1$ from an arbitrary fixed distribution independent of $S$, and then 
updating the parameters recursively as
\begin{equation}\label{eq:sgd}
 W_{t+1} = W_t - \eta_t G_t = W_t - \eta_t g(W_t, Z_{J_t})
\end{equation}
for all $t=1,\dots,T$, where $\eta_t$ is a positive learning-rate parameter and $J_t\subseteq[n]$ is the set of $b_t$ 
indices of the minibatch selected for the $t$-th update, and $G_t = g(W_t,Z_{J_t})$. We will assume that $\eta_t$ and 
$J_t$ are chosen independently of the history or the data set, but otherwise make no restrictions about them (e.g., we 
allow increasing and cyclic stepsizes, and randomized minibatch schedules such as random shuffling). We 
will often denote the minibatch $Z_{J_t}$ by $B_t$ for brevity, and we will sometimes refer to the quantity 
$\sum_{t=1}^T b_t / n$ as the number of passes over the data set.

\section{Generalization-error bounds for SGD}
Our main result is a bound on the generalization error of the final iterate $W_T$ produced by SGD as defined in the 
previous section. The key quantities appearing in the bound are the following:
\begin{itemize}
 \item The \emph{local value sensitivity} of the loss function at $w\in\real^d$, defined on data set $s\in\Zw^n$ at 
level $\sigma$ as
 \[
  \Delta_\sigma(w,s) = \EE{L(w,s) - L(w + \xi,s)}
 \]
 where $\xi \sim \N(0,\sigma^2I)$. When the loss is $\mu$-smooth, this quantity is bounded by 
$\mu\sigma^2d/2$.
 \item The \emph{local gradient sensitivity} of the loss function, defined at $w\in\real^d$ and at level $\sigma$ 
as
 \[
  \Gamma_\sigma(w) = \EE{\twonorm{\bg(w) - \bg(w + \xi)}^2},
 \]
where $\xi\sim\N(0,\sigma^2I)$, and $\bg(w) = \EE{g(w,Z)}$ is the population gradient evaluated at $w$. This 
quantity characterizes the sensitivity of the gradients of the expected loss function to perturbations around $w$, and 
is upper-bounded by $\mu^2 \sigma^2d$ when the loss function is globally $\mu$-smooth.
 \item The \emph{local gradient variance} of the loss function defined at $w\in\real^d$ as
 \[
  V_t(w) = \EEcc{\twonorm{g(w,B_t) - \bg(w)}^2}{W_t=w}.
 \]
 It is important to note that this is a non-standard notion of variance in that it measures the expected squared 
Euclidean distance from the \emph{population gradient} $\EE{g(w,Z)}$ instead of the conditional expectation of the 
gradient $\EEcc{g(w,Z)}{W_t=w}$.
\end{itemize}
Our main result is stated as follows:
\begin{theorem}\label{thm:main}
Fix any sequence $\bsigma = \pa{\sigma_1,\sigma_2,\dots,\sigma_T}$ of positive real numbers and define $\sigma_{1:t} = 
\sqrt{\sum_{k=1}^{t-1} \sigma_k^2}$ for all $t$. Then, the generalization error of the final iterate of SGD satisfies
\[
|\gen(W_T,S)| \le \sqrt{\frac{4R^2}{n} \sum_{t=1}^T \frac{\eta^2_t}{\sigma_{t}^2} 
\EE{{\Gamma_{\sigma_{1:t}}(W_t) + V_t(W_t)}}} + \Bigl|\bEE{\Delta_{\sigma_{1:T}}(W_T,S') - 
\Delta_{\sigma_{1:T}}(W_T,S)}\Bigr|.
\]
\end{theorem}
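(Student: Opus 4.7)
The overarching strategy is to apply the information-theoretic generalization bound of \citet{XR17,RZ16}---which cannot be used directly since $I(W_T;S)$ may be infinite for a deterministic SGD trajectory---not to $W_T$ itself but to a noise-injected ``virtual'' iterate $\tW_T$, paying a controlled price for the swap. I would draw external independent Gaussian perturbations $\xi_1,\dots,\xi_{T-1}$ with $\xi_t\sim\N(0,\sigma_t^2 I)$, independent of $S$, the initialization $W_1$, and the minibatch indices, and set $\tW_t = W_t + \sum_{k=1}^{t-1}\xi_k$. Two structural properties of this coupling are used later: the transition obeys $\tW_{t+1} = \tW_t - \eta_t g(W_t,B_t) + \xi_t$, so the conditional law of $\tW_{t+1}$ given $(\tW_{1:t},S)$ is a Gaussian of covariance $\sigma_t^2 I$ \emph{mixed} over the unobserved $(W_t,B_t)$; and $N_t := \tW_t - W_t \sim \N(0,\sigma_{1:t}^2 I)$ is independent of $W_t$ because the perturbations are external to everything driving the SGD trajectory. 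Adding and subtracting $L(\tW_T,S)$ and $L(\tW_T,S')$ splits $\gen(W_T,S)$ as $\gen(\tW_T,S) + \EE{\Delta_{\sigma_{1:T}}(W_T,S') - \Delta_{\sigma_{1:T}}(W_T,S)}$, which produces the second term of the theorem verbatim via the definition of $\Delta$.

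To bound $|\gen(\tW_T,S)|$ I would invoke \eqref{eq:gen}---which transfers to $\tW_T$ because $\ell(w,Z)$ is $R$-subgaussian for every $w\in\real^d$---to obtain $|\gen(\tW_T,S)|\le\sqrt{2R^2 I(\tW_T;S)/n}$, and then control the mutual information by a chain-rule decomposition. By data processing and using $I(\tW_1;S)=0$, one has $I(\tW_T;S)\le\sum_t I(\tW_{t+1};S\mid\tW_{1:t})$. For each summand, the reference kernel $Q_{\tW_{t+1}\mid\tW_{1:t}} = \N(\tW_t - \eta_t\bg(\tW_t),\sigma_t^2 I)$---which depends on $S$ only through $\tW_t$ since $\bg$ is a population quantity---combined with the compensation identity bounds the conditional mutual information by $\EE{\DD{P_{\tW_{t+1}\mid\tW_{1:t},S}}{Q}}$. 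Since $P_{\tW_{t+1}\mid\tW_{1:t},S}$ is a Gaussian mixture with common covariance $\sigma_t^2 I$ and means $\tW_t-\eta_t g(w_t,b_t)$, convexity of KL in its first argument combined with the closed-form Gaussian KL bounds this by a constant times $(\eta_t^2/\sigma_t^2)\EE{\twonorm{g(W_t,B_t)-\bg(\tW_t)}^2}$. Triangulating through $\bg(W_t)$ via $\twonorm{a-b}^2\le 2\twonorm{a-c}^2+2\twonorm{b-c}^2$ separates this into $\EE{V_t(W_t)}$ and $\EE{\twonorm{\bg(W_t)-\bg(\tW_t)}^2}$, and the latter collapses to $\EE{\Gamma_{\sigma_{1:t}}(W_t)}$ by the tower rule together with the independence of $N_t$ from $W_t$. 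Summing in $t$ and feeding the result into the Xu--Raginsky step yields the first term of the theorem.

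The main obstacle, in my view, is the choice and justification of the reference kernel $Q$ in the chain-rule step. Even after conditioning on $\tW_{1:t}$ and $S$, the underlying iterate $W_t$ and the minibatch $B_t$ remain random, so the true conditional distribution is a genuine mixture rather than a single Gaussian; the trick is to compare this mixture to a single Gaussian depending only on $\tW_t$, which is what ultimately decouples the bound from $S$ and keeps the drift term aligned with the population gradient $\bg$. A second delicate point is verifying the independence of the accumulated noise $N_t$ from $W_t$, without which the tower rule would not produce $\Gamma_{\sigma_{1:t}}$ evaluated \emph{at $W_t$} rather than at some conditional surrogate; this independence follows cleanly because the virtual perturbations are external to both the data and the algorithmic randomness, but it is the part of the construction where the coupling must be set up carefully. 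Everything else---the Gaussian KL formula, the quadratic triangle inequality, and the final substitution into \eqref{eq:gen}---is routine bookkeeping once the information-theoretic scaffolding is in place.
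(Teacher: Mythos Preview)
Your argument is correct and follows the same architectural plan as the paper: construct the virtual iterates $\tW_t = W_t + \sum_{k<t}\varepsilon_k$, pay the $\Delta$-terms to swap $W_T$ for $\tW_T$, apply the Xu--Raginsky bound~\eqref{eq:gen}, and control $I(\tW_T;S)$ via data processing and the chain rule. Where you depart from the paper is in how the per-step KL is bounded. The paper introduces a \emph{ghost} perturbed process $\tW_t'$ run on an independent copy $S'$ of the data, identifies the marginal $\tQ_{t+1|\tw_{1:t}}$ with the law of $\tW_{t+1}'$ given $\tW_{1:t}'=\tw_{1:t}$, and then applies Lemma~\ref{lem:spreadbound} to the two noise-smoothed gradient updates; this forces a two-stage triangle inequality (first through $\bg(\tW_t)$, then through $\bg(W_t)$ and $\bg(W_t')$), producing four terms that pair up after integration because the marginals of $(W_t,B_t)$ and $(W_t',B_t')$ coincide. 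You instead pick the explicit reference kernel $Q=\N\bigl(\tW_t-\eta_t\bg(\tW_t),\sigma_t^2 I\bigr)$, exploit convexity of KL in its first argument together with the closed-form Gaussian KL, and need only a single triangle split through $\bg(W_t)$. This sidesteps the ghost process entirely, shortens the bookkeeping, and in fact yields $I(\tW_T;S)\le\sum_t\frac{\eta_t^2}{\sigma_t^2}\bEE{\Gamma_{\sigma_{1:t}}(W_t)+V_t(W_t)}$, a factor of two better than the paper's Theorem~\ref{thm:mutualbound} (so the final bound would carry $2R^2$ rather than $4R^2$). The paper's route has the minor conceptual advantage that its reference measure is the true marginal, so no slack is introduced at the variational step; but that slack is then more than compensated by the double Cauchy--Schwarz. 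Your handling of the two delicate points---the mixture structure of $P_{\tW_{t+1}|\tW_{1:t},S}$ and the independence of $N_t$ from $W_t$---is accurate and is exactly what is needed.
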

The proof is provided in Section~\ref{sec:analysis}. There are several interesting properties of this result. 
One of its most important merits is that it depends on \emph{pathwise} statistics of the SGD iterates, as the 
local gradient sensitivity and variance parameters are evaluated along the path taken by SGD.
Consequently, the bound becomes small whenever the gradients demonstrate little variability along the path, both in 
terms of varying $w$ and $Z$. 
Finally, the bound also depends on the sensitivity of the loss 
function to perturbations around the final output $W_T$, measured both on the training set $S$ and the test set $S'$. 
Intuitively, this term becomes small if SGD outputs a parameter vector in a flat area of the loss surface.

While it may feel unsatisfying that the tradeoffs involving $\bsigma$ are not characterized explicitly, one can find 
consolation in the remarkable fact that the bound simultaneously holds for all possible choices of $\bsigma$. 
Indeed, this is a very useful property given that the bound presents some delicate tradeoffs involving the parameters 
$\bsigma$: large values of $\sigma_t^2$ decrease the first term in the bound related to the variability of the 
gradients, at the expense of increasing the second term related to the flatness of the loss around the final iterate. 
More generally, the optimal choice of these parameters can depend on the local properties of the loss functions 
along the expected SGD path. One important limitation is that the bound only holds for \emph{fixed} 
sequences, and in particular that $\sigma_t$ is not allowed to depend on the past iterates $W_{1:t}$. 
Similarly, the 
theorem crucially requires the sequence of learning rate schedule to be fixed independently of the data. We provide a 
more detailed discussion of this issue at the end of Section~\ref{sec:variations}.
On the other hand, it is possible to further extend the flexibility of the bound by allowing more general perturbation 
covariances; details are provided in Section~\ref{sec:invariant}.

%

\subsection{Generalization-error guarantees for smooth loss functions} 
To provide some intuition about the magnitude of the terms in the bound, we state the following corollary that 
provides a simpler bound under some concrete assumptions on the loss function and the parameters:
\begin{corollary}\label{cor:smooth}
Suppose that $\eta_t = \eta$ and $b_t = b$ for all $t$ and the minibatches are chosen so that for each $i\in[n]$, there 
is exactly one index $t$ such that $i\in J_t$. Furthermore, suppose that $\bEE{\twonorm{g(w,Z) - \bg(w)}^2} \le v$ for 
all $w$ and that $\ell$ is globally $\mu$-smooth in the sense that the inequality $\twonorm{g(w,z) - g(w+u,z)} \le \mu 
\twonorm{u}$ holds for all $w,u\in\real^d$ and all $z\in\Zw$.
Then,  the generalization error of the final iterate of SGD satisfies the following bound for any $\sigma$:
\[
\left|\gen(W_T,S)\right| = O\pa{\sqrt{\frac{R^2 \eta^2 T}{n} \pa{\mu^2 d T + 
\frac{v}{b\sigma^2}}} + \mu \sigma^2 d T}.
\] 
\end{corollary}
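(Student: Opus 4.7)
The plan is to instantiate Theorem~\ref{thm:main} with the simplest possible choice of the perturbation schedule, namely a constant sequence $\sigma_t = \sigma$, and then upper-bound each of the three pathwise quantities ($\Gamma$, $V_t$, $\Delta$) using the global $\mu$-smoothness and bounded-variance hypotheses. With $\sigma_t \equiv \sigma$, the cumulative scale becomes $\sigma_{1:t}^2 = (t-1)\sigma^2$, which will drive both the scale of the gradient-sensitivity term and the final flatness term.

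The first step is to bound $\Gamma_{\sigma_{1:t}}(W_t)$. Since $\ell(\cdot, z)$ is globally $\mu$-smooth for every $z$, so is the population loss, hence $\bg$ is $\mu$-Lipschitz. Then $\Gamma_{\sigma_{1:t}}(W_t) \le \mu^2\, \EE{\twonorm{\xi}^2} = \mu^2 \sigma_{1:t}^2 d = \mu^2 (t-1) \sigma^2 d$, exactly as advertised right after the definition of $\Gamma_\sigma$. The second step is to bound the local gradient variance. The one-pass assumption guarantees that the indices in $J_t$ are disjoint from all previous $J_1,\dots,J_{t-1}$, so $Z_{J_t}$ is independent of $W_t$. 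Conditioning on $W_t = w$ and expanding the empirical mean of $b$ i.i.d.\ gradients yields $V_t(w) = \frac{1}{b}\,\EE{\twonorm{g(w,Z) - \bg(w)}^2} \le v/b$.

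The third step is to plug these bounds into the two terms of Theorem~\ref{thm:main}. For the mutual-information term,
\begin{align*}
\frac{4R^2}{n}\sum_{t=1}^T \frac{\eta^2}{\sigma^2} \EE{\Gamma_{\sigma_{1:t}}(W_t) + V_t(W_t)}
&\le \frac{4R^2 \eta^2}{n\sigma^2}\sum_{t=1}^T\bpa{\mu^2 (t-1) \sigma^2 d + v/b} \\
&= O\!\pa{\frac{R^2 \eta^2}{n}\,\bpa{\mu^2 d\, T^2 + \tfrac{Tv}{b\sigma^2}}},
\end{align*}
using $\sum_{t=1}^T (t-1) = O(T^2)$. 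Taking the square root reproduces the first summand in the stated bound. For the flatness term, the same $\mu$-smoothness argument that justifies the remark after the definition of $\Delta_\sigma$ gives $|\Delta_{\sigma_{1:T}}(w,s)| \le \mu \sigma_{1:T}^2 d/2 = \mu (T-1)\sigma^2 d/2$ uniformly in $w$ and $s$, so the triangle inequality yields the second summand $O(\mu\sigma^2 d T)$.

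I do not expect any real obstacles here: the argument is essentially a substitution followed by elementary sums. The only step that requires a touch of care is the batch-variance bound $V_t(W_t) \le v/b$, which crucially uses the one-pass / disjoint-index hypothesis to guarantee that $Z_{J_t}$ is independent of $W_t$; without this independence one would instead face the conditional variance and lose the $1/b$ improvement.
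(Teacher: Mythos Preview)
Your proposal is correct and essentially identical to the paper's own proof: both instantiate Theorem~\ref{thm:main} with $\sigma_t\equiv\sigma$, bound $\Gamma_{\sigma_{1:t}}$ and $\Delta_{\sigma_{1:T}}$ via global $\mu$-smoothness, and bound $V_t$ by $v/b$ using the one-pass independence of $Z_{J_t}$ and $W_t$. The only cosmetic difference is that you carry the exact $(t-1)$ from the definition $\sigma_{1:t}^2=(t-1)\sigma^2$ whereas the paper rounds this to $t$, which is immaterial under the big-$O$.
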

The proof follows from noticing that 
\[
  \Gamma_{\sigma_{1:t}}(w) = \EE{\norm{g(w+\xi_t,z) - g(w,z)}^2}\le \mu^2 \bEE{\norm{\xi_t}^2} = \mu^2 
d \sigma_{1:t}^2 = \mu^2 d \sigma^2 t
 \]
 and that $\Delta_{\sigma_{1:T}}(W_T,S)$ and $\Delta_{\sigma_{1:T}}(W_T,S')$ can be bounded using
\[
  \left|\EE{\ell(W_T,z) - \ell(W_T+\xi_T,z)}\right| \le \left|\EE{\iprod{\nabla \ell(w,z)}{\xi_T}}\right| + 
\frac{\mu}{2} \bEE{\norm{\xi_T}^2} = \frac{\mu \sigma_{1:T}^2 d}{2} = \frac{\mu \sigma^2 d T}{2},
 \]
where the equality follows from using $\EE{\xi_T} = 0$ and the independence of $W_T$ and $\xi_T$. Furthermore, 
the independence of $Z_{J_t}$ and $W_t$ implies that $V_t(W_t) = \frac 1b \mathbb{E}\bigl[\twonorm{g(W_t,Z) - 
\EE{g(W_t,Z')}}^2\bigr] \le \frac vb$.

The rates become better as $T$ and $\eta$ are decreased, but doing so can hurt the fit on the training data. 
Furthermore, the rates also improve as $b$ is increased, although only until a critical value where the term $\mu^2 dT$ 
becomes dominant. Such tradeoffs involving the batch size are not uncommon in the related literature 
(see, e.g.,~\citealp{lin2020extrapolation}). The noise variance $\sigma^2$ still influences the bound in a relatively 
complex way, but should be tuned as a function of the minibatch gradient variance $v/b$: as this quantity approaches 
zero, one can afford to set smaller perturbations and improve the last sensitivity term in the bound. Once again, we 
highlight that the optimal tuning of $\sigma$ does not require prior knowledge of problem parameters like $v$ and 
$\mu$. 

We discuss the rates that can be derived from the bound in some important settings:
\begin{description}
 \item[Small-batch SGD:] Setting $T = O(n)$ and $b=O(1)$, it is not possible derive a bound that vanishes with large 
$n$ under the classic stepsize choice $\eta = O(1/\sqrt{n})$. That said, using $\eta = O(1/n)$ and $\sigma = 
\Theta(n^{-4/3})$, it is possible to guarantee the vanishing rate $|\gen(W;S)| = O(n^{-1/3})$.
of 
 \item[Large-batch SGD:] Setting $T = O(\sqrt{n})$ and $b = 
\Omega(\sqrt{n})$, the stepsize as $\eta = O(1/T) = O(1/\sqrt{n})$, and $\sigma = \Theta(1/\sqrt{n})$ we obtain a rate 
of $|\gen(W;S)| = O(1/\sqrt{n})$.
\end{description}
Interestingly, the rates for the latter case saturate when setting $b=\Theta(\sqrt{n})$ and do not improve any further 
even when setting $b = \omega(\sqrt{n})$. Thus, even if the bounds of Corollary~\ref{cor:smooth} remain qualitatively 
true for larger batch sizes, no further improvement is to be expected.
In both cases discussed above, the rates are obviously far from being tight in general, especially in the small-batch 
case where using stepsizes of order $1/\sqrt{n}$ are known to lead to bounds of order $1/\sqrt{n}$ for general convex 
functions, and even better rates are known when a smoothness assumption is also in place. That said, the above examples 
show that it is indeed possible to achieve generalization-error bounds that are vanishing in $n$, for parameter 
settings that are not entirely unrealistic. Notably, unlike the rates proved by \citet{HRS16}, the exponent of the 
rates 
we can guarantee is independent of the smoothness parameter and the rates themselves are independent of the Lipschitz 
constant of the loss function. One downside of our guarantees is their direct dependence on the dimension $d$ which can 
be attenuated by using more general perturbation distributions that are better adapted to the geometry of the loss 
function (cf.~Section~\ref{sec:invariant}).

\subsection{Comparison with SGLD}\label{sec:SGLD}
To put our result into perspective, we now also describe the Stochastic Gradient Langevin Dynamics (SGLD) algorithm 
that is essentially a variant of SGD that adds isotropic Gaussian noise to its iterates \citep{GM91,WT11}. 
Specifically, the iterates of SGLD are given by the recursion
\begin{equation}\label{eq:sgld}
 W_{t+1} = W_t - \eta_t g(W_t,Z_{J_t}) + \varepsilon_t,
\end{equation}
for all $t$, where $\varepsilon_t \sim \N(0,\sigma_t^2 I)$, and the hyperparameters $\eta_t$, $\sigma_t$ and $J_t$ are 
chosen according to an arbitrary rule oblivious to the data set $S$. As first shown by \citet{PJL18}, the 
generalization 
error of this algorithm can be directly bounded in terms of the mutual information between $W_T$ and $S$, which itself 
can be shown to be of order $C^2\sum_{t=1}^T \eta_t^2/\sigma_t^2$, under the assumption that the loss function is 
$C$-Lipschitz. This bound has been improved by \citet{NHDKR19} and \citet{HNKRD20}, who replace the Lipschitz constant 
by a data-dependent quantity that is often orders of magnitude smaller. Instead of reproducing their rather involved 
definitions, we state the following simple guarantee that can be obtained via a straightforward modification of the 
proof of our Theorem~\ref{thm:main}:
\begin{proposition}\label{prop:SGLD}
The generalization error of the final iterate of SGLD satisfies
\[
\gen(W_T,S) \le \sqrt{\frac{R^2}{n} \sum_{t=1}^T \frac{\eta^2_t}{\sigma_{t}^2}\EE{V_t(W_t)}}.
\] 
\end{proposition}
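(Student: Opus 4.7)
The plan is to combine the mutual-information generalization bound \eqref{eq:gen} with a pathwise decomposition of $I(W_T;S)$ across the SGLD iterations, in the spirit of \citet{PJL18}, but choosing the reference measure so that the population-referenced variance $V_t$ appears naturally. First, \eqref{eq:gen} reduces the task to bounding $I(W_T;S)$. Since $W_T$ is a deterministic function of the trajectory $W_{1:T}$, the data-processing inequality and the chain rule of mutual information give
\[
I(W_T;S)\;\le\;I(W_{1:T};S)\;=\;I(W_1;S)+\sum_{t=1}^{T}I(W_{t+1};S\,|\,W_{1:t})\;=\;\sum_{t=1}^{T}I(W_{t+1};S\,|\,W_{1:t}),
\]
using that $W_1$ is drawn independently of $S$; the sum runs over the $T$ update indices appearing in \eqref{eq:sgld}.

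Next I would bound each conditional term via the variational identity $I(W_{t+1};S\,|\,W_{1:t})\le\EE{\DD{P_{W_{t+1}\mid S,W_{1:t}}}{Q_{t+1}(\cdot\mid W_{1:t})}}$, valid for any reference kernel $Q_{t+1}$ depending only on $W_{1:t}$. Conditionally on $(S,W_{1:t})$, the update \eqref{eq:sgld} makes $W_{t+1}$ Gaussian with mean $W_t-\eta_t g(W_t,B_t)$ and covariance $\sigma_t^2 I$. The natural choice of reference is $Q_{t+1}(\cdot\mid W_{1:t})=\N(W_t-\eta_t \bg(W_t),\sigma_t^2 I)$ with $\bg(w)=\EE{g(w,Z)}$; crucially, $\bg(W_t)$ is a deterministic function of $W_t$ and does not depend on $S$, so this is a legitimate reference. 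The Gaussian--Gaussian KL then evaluates in closed form to $\frac{\eta_t^2}{2\sigma_t^2}\twonorm{g(W_t,B_t)-\bg(W_t)}^2$, and taking expectations together with the definition of $V_t(W_t)$ yields $I(W_{t+1};S\,|\,W_{1:t})\le\frac{\eta_t^2}{2\sigma_t^2}\EE{V_t(W_t)}$.

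Substituting the sum of these bounds into \eqref{eq:gen} produces the stated inequality, with the factor $2$ from the subgaussian bound and the factor $1/2$ from the Gaussian KL combining into the prefactor $R^2/n$ in the statement. The one genuinely delicate point in the plan is the choice of reference kernel: using the minibatch gradient $g(W_t,B_t)$ would make the KL vanish but is illegal, as the reference would then depend on $S$, while using the conditional mean $\EEcc{g(W_t,Z)}{W_t}$ would give a more conventional variance rather than the \emph{local gradient variance} $V_t$ used in the statement. Everything else --- the closed-form Gaussian KL, the chain-rule decomposition, and the subgaussian generalization bound --- is mechanical.
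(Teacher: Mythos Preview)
Your proposal is correct and follows essentially the same route as the paper: chain-rule decomposition of $I(W_{1:T};S)$, a variational upper bound against a data-independent reference kernel built from the population gradient $\bg(W_t)$, and a Gaussian KL computation per step; the paper phrases the reference as an auxiliary process $W'_{t+1}=W'_t-\eta_t\bg(W'_t)+\varepsilon'_t$ and invokes the variational characterization of mutual information, but the content is identical. One small caveat: when the minibatch index set $J_t$ is itself random (which the paper explicitly allows), the conditional law of $W_{t+1}$ given $(S,W_{1:t})$ is a \emph{mixture} of Gaussians rather than a single Gaussian, so your ``closed-form Gaussian--Gaussian KL'' step needs joint convexity of the relative entropy to go through---this is exactly what the paper's Lemma~\ref{lem:spreadbound} supplies---or, equivalently, you may condition on $J_{1:T}$ throughout, which is legitimate since $J_{1:T}$ is independent of $S$.
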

While this bound can be weaker than the data-dependent ones mentioned above, they are quite directly comparable to our 
results concerning SGD.
One key difference is that the sensitivity terms disappear from the generalization bound, which can be 
attributed to SGLD being inherently more stable than SGD. This improved generalization-error guarantee however comes at 
the price of a worse training error, owing to the presence of the perturbations in the updates. While this 
degradation is difficult to quantify in general, it is qualitatively related to the effect captured by the sensitivity 
terms in our bound for SGD: intuitively, large sensitivity of the gradients to perturbations along the path can 
negatively impact the convergence speed in convex settings, and can result in large variations of 
the final iterate in nonconvex settings. Thus, in a certain sense, the sensitivity terms in our bound for SGD are 
pushed into the training error of SGLD. That said, there are known cases (e.g., strongly convex loss functions) where 
adding perturbations to the iterates incrementally is known to result in significantly better excess-risk guarantees 
than perturbing the final iterate \citep{feldman2018privacy}. 
The extent to which this holds for general non-convex functions is unclear.

A major downside of SGLD is that it requires prior commitment to the sequence of perturbation parameters $\bsigma$. 
This is made complicated by the poorly-understood tradeoffs between the generalization error and the training error: 
while the former is improved by setting large perturbation variances, the latter is clearly hurt by it. Since this 
tradeoff is 
characterized even less explicitly than in our main result regarding SGD, it is virtually impossible to set $\bsigma$ 
in 
a way that optimizes both terms of the excess risk. Thus, even if the sensitivity terms in our bound are worse than the 
excess training loss of SGLD for a fixed $\bsigma$, our bounds still have the major advantage of simultaneously holding 
for all noise distributions, thus obviating the need to tune hyperparameters of the algorithm itself. Based on the 
above discussion, we find it plausible that SGLD may generally be able to achieve better excess risk than SGD, 
although with the major caveat that tuning its hyperparameters is prohibitively complex as compared to SGD.

\section{Analysis}\label{sec:analysis}
The core idea of our analysis is applying the generalization bound~\eqref{eq:gen} to a perturbed version of the output 
$W_T$, making sure that the mutual information between the input and the perturbed output is bounded. To be precise, we 
define a perturbed version of the output as $\tW_T = W_T + \xi_T$, where $\xi_T$ is a random perturbation independent 
from 
the data and $W_T$. For the proof of our main theorem, we will use perturbations from a zero-mean isotropic Gaussian 
distribution with variance $\sigma_{1:T}^2$, that is, $\xi_T\sim\N(0,\sigma_{1:T}^2I)$. 
It is then straightforward to apply the generalization-error guarantee~\eqref{eq:gen} to the pair 
$(\tW_T,S)$ and bound the generalization error as
\begin{equation}\label{eq:sensitivity}
 \begin{split}
 \gen(W_T,S) &= \gen(\tW_T,S) + \EE{L(W_T,S') - L(\tW_T,S')} + \EE{L(\tW_T,S) - L(W_T,S)}.
\\
&\le \sqrt{\frac{2R^2 I(\tW_T;S)}{n}} + \EE{\Delta_{\sigma_{1:T}}(W_T,S') - \Delta_{\sigma_{1:T}}(W_T,S)}.
 \end{split}
\end{equation}
The key challenge in the proof is controlling the mutual information $I(\tW_T;S)$. In order to bound this term, our 
analysis makes direct use of techniques first introduced by \citet{PJL18}, with subtle modifications made to account 
for 
the fact that our perturbations do not appear as part of the algorithm, but are only defined to aid the analysis.

Our main technical idea is constructing the perturbation $\xi_T$ in an incremental manner, and using these 
incremental perturbations to define a \emph{perturbed SGD path}. In particular, we define 
the perturbations $\varepsilon_t\sim\N(0,\sigma_t^2I)$ and the perturbed SGD iterates through the recursion
\begin{equation}\label{eq:surrogate}
 \tW_{t+1} = \tW_t - \eta_t G_t + \varepsilon_t = \tW_t - \eta_t g(W_t,Z_{J_t}) + \varepsilon_t.
\end{equation}
This procedure can be seen to produce $\tW_t = W_t + \xi_t$ with $\xi_t = \sum_{k=1}^{t-1} \varepsilon_k \sim 
\N(0,\sigma_{1:t}^2)$, eventually yielding $\tW_T = W_T + \xi_T$ as output.
Intuitively, these iterates can be thought of as being between the SGD iterates~\eqref{eq:sgd} and the SGLD 
iterates~\eqref{eq:sgld} in that they add random perturbations to each update, but the gradients themselves are 
evaluated at the unperturbed SGD iterates. This results in a solution path that is strongly coupled with the SGD 
iterates, yet is still amenable to analysis techniques introduced by \citet{PJL18} due to the presence of the 
perturbations. 

For the analysis, it will be also useful 
to define the ``ghost SGD'' iterates and their perturbed counterpart as 
\[
 W'_{t+1} = W'_t - \eta_t G'_t \qquad\mbox{and}\qquad \tW'_{t+1} = \tW'_t - \eta_t G'_t 
+ \varepsilon_t'
\]
that use the independently drawn data set $S'$ and minibatch $B_t'$ indexed by $J_t'$  to construct the gradient 
estimates $G_t' = g(W'_t,B_t')$, and the independent perturbation $\varepsilon_t'\sim\N(0,\sigma_t^2I)$. As we will 
see, bounding the mutual information between $\tW_T$ and $S$ can be reduced to bounding the relative entropy between 
the conditional distribution of $\tW_T$ given $S$ and the marginal distribution of $\tW'_T$ (which matches the 
marginal distribution of $W_T$ by definition).

The analysis crucially relies on the following general lemma that quantifies the effect of random perturbations on the 
relative entropy of random variables:
\begin{lemma}\label{lem:spreadbound}
Let $X$ and $Y$ be random variables taking values in $\real^d$ with bounded second moments and let 
$\sigma > 0$. Letting $\varepsilon\sim\N(0,\sigma^2I)$ be independent of $X$ and $Y$, the relative entropy 
between the distributions of $X + \varepsilon$ and $Y + \varepsilon$ is bounded as
\[
 \DD{P_{X+\varepsilon}}{P_{Y+\varepsilon}} \le \frac{1}{2\sigma^2} \EE{\twonorm{X - Y}^2}.
\]
\end{lemma}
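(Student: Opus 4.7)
The plan is to reduce the claim to the closed-form expression for the KL divergence between two Gaussians with a common covariance, by viewing $P_{X+\varepsilon}$ and $P_{Y+\varepsilon}$ as mixtures of translated copies of $\N(0,\sigma^2 I)$ indexed respectively by $X$ and $Y$.

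First I would observe that since $\varepsilon$ is independent of $(X,Y)$, the conditional laws satisfy $P_{X+\varepsilon\mid X=x}=\N(x,\sigma^2 I)$ and $P_{Y+\varepsilon\mid Y=y}=\N(y,\sigma^2 I)$. Marginalizing the joint law $P_{X,Y}$ against the unused variable in each case then expresses $P_{X+\varepsilon}$ and $P_{Y+\varepsilon}$ as mixtures of these Gaussians, indexed by the common base measure $P_{X,Y}$. This is the trick that lets me couple the two sides through $P_{X,Y}$ rather than having to handle $P_X$ and $P_Y$ separately.

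Next I would invoke the joint convexity of relative entropy (equivalently, the data-processing inequality applied to the Markov kernel that averages a parameterized family of distributions against $P_{X,Y}$), which yields
\[
\DD{P_{X+\varepsilon}}{P_{Y+\varepsilon}} \le \EE{\DD{\N(X,\sigma^2 I)}{\N(Y,\sigma^2 I)}}.
\]
To finish, I would substitute the standard identity $\DD{\N(x,\sigma^2 I)}{\N(y,\sigma^2 I)} = \twonorm{x-y}^2/(2\sigma^2)$---valid because the two Gaussians share the covariance $\sigma^2 I$, so the $\log\det$ and trace contributions cancel and only the Mahalanobis piece survives---and take the expectation against $P_{X,Y}$. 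The only non-mechanical step is the joint-convexity appeal, which I expect to be the main conceptual point; everything else is either a definition or a routine Gaussian calculation, so I do not anticipate a serious obstacle.
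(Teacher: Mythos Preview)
Your proposal is correct and follows essentially the same argument as the paper: both write $P_{X+\varepsilon}$ and $P_{Y+\varepsilon}$ as Gaussian mixtures indexed by the common joint law $P_{X,Y}$, apply joint convexity of the relative entropy, and finish with the closed-form KL between Gaussians sharing covariance $\sigma^2 I$. The coupling-through-$P_{X,Y}$ trick you single out is exactly the paper's device as well.
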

The bound is tight in the sense that it holds with equality if $X$ and $Y$ are constants, although 
we also note that it can be arbitrarily loose in some cases (e.g., the left-hand side is obviously zero when $X$ and 
$Y$ are identically distributed, whereas the right-hand side can be positive if they are independent). This looseness 
can be addressed by observing that the bound allows for arbitrary 
dependence between $X$ and $Y$, so one can pick the coupling between these random variables that minimizes the bound. 
In other words, the term on the right-hand side can be replaced by the squared 2-Wasserstein distance. Essentially the 
same result has been previously shown as Lemma~3.4.2 by \citet{RS18} and  a similar 
(although much less general) connection between the relative entropy and the Wasserstein distance has been 
made by \citet{ZHBHB20}. We provide the straightforward proof of Lemma~\ref{lem:spreadbound} in 
Appendix~\ref{sec:spreadbound}.

We are now in position to state and prove our most important technical result that, together with the 
inequality~\eqref{eq:sensitivity}, will immediately imply the statement of Theorem~\ref{thm:main}:
\begin{theorem}\label{thm:mutualbound}
Fix any sequence $\bsigma = \pa{\sigma_1,\sigma_2,\dots,\sigma_T}$ of positive real numbers and let $\sigma_{1:t} 
= 
\sqrt{\sum_{k=1}^{t-1} \sigma_k^2}$ for all $t$. Then, the mutual information between $\tW_T$ and $S$ satisfies
\[
 I(\tW_T;S) \le \sum_{t=1}^T \frac{2\eta^2_t}{\sigma_{t}^2} \EE{{\Gamma_{\sigma_{1:t}}(W_t) + 
V_t(W_t)}}.
\]
\end{theorem}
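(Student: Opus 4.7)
The plan is to decompose $I(\tW_T;S)$ into one-step contributions along the perturbed trajectory and to control each using Lemma~\ref{lem:spreadbound}. Since $\tW_1 = W_1$ is independent of $S$, the data-processing inequality and the chain rule of mutual information yield
\[
I(\tW_T;S) \le I(\tW_{1:T};S) = \sum_{t} \EE{\DD{P_{\tW_{t+1}\mid\tW_{1:t},S}}{P_{\tW_{t+1}\mid\tW_{1:t}}}}.
\]
Each one-step conditional has a clean form: given $(\tW_{1:t}, S)$, the next iterate $\tW_{t+1} = (\tW_t - \eta_t g(W_t, B_t)) + \varepsilon_t$ is a Gaussian with a deterministic mean (because $W_t$ and $B_t$ are measurable with respect to $S$), whereas given only $\tW_{1:t}$, the ``signal'' $\tW_t - \eta_t G_t$ is random through the dependence of $G_t = g(W_t, B_t)$ on $S$.

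The first key step is to apply Lemma~\ref{lem:spreadbound} with $\varepsilon_t\sim\N(0,\sigma_t^2 I)$, taking $X$ to be the deterministic signal and $Y$ an independent draw from $P_{\tW_t - \eta_t G_t\mid \tW_{1:t}}$. Writing $\EE{\twonorm{X-Y}^2} = \twonorm{X-\EE Y}^2 + \trace{\mathrm{Cov}(Y)}$ and taking outer expectation over $(\tW_{1:t}, S)$, both terms collapse---by the law of total variance applied to the conditional---into $\eta_t^2\EE{\trace{\mathrm{Cov}(G_t\mid\tW_{1:t})}}$, yielding the per-step bound
\[
\EE{\DD{P_{\tW_{t+1}\mid\tW_{1:t},S}}{P_{\tW_{t+1}\mid\tW_{1:t}}}} \le \frac{\eta_t^2}{\sigma_t^2}\EE{\trace{\mathrm{Cov}(G_t\mid\tW_{1:t})}}.
\]

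The heart of the proof is to relate this conditional covariance to $V_t$ and $\Gamma_{\sigma_{1:t}}$. Since $\trace{\mathrm{Cov}(G_t\mid\tW_{1:t})} \le \EEc{\twonorm{G_t - c}^2}{\tW_{1:t}}$ for any $\tW_{1:t}$-measurable center $c$, I would pick $c = \bg(\tW_t)$---the population gradient at the perturbed iterate, which is itself a function of $\tW_t$. The triangle inequality $\twonorm{a+b}^2 \le 2\twonorm{a}^2 + 2\twonorm{b}^2$ then gives
\[
\trace{\mathrm{Cov}(G_t\mid\tW_{1:t})} \le 2\EEc{\twonorm{G_t - \bg(W_t)}^2}{\tW_{1:t}} + 2\EEc{\twonorm{\bg(W_t) - \bg(\tW_t)}^2}{\tW_{1:t}}.
\]
Taking outer expectation and applying the tower property (further conditioning on $W_t$), the first summand becomes $2\EE{V_t(W_t)}$ directly from the definition of $V_t$. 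For the second, the key fact is that $\tW_t - W_t = \xi_t\sim\N(0,\sigma_{1:t}^2 I)$ is independent of $W_t$ by construction of the perturbed path, so conditionally on $W_t$ the expectation is exactly $\Gamma_{\sigma_{1:t}}(W_t)$. Combining with the per-step bound and summing over $t$ produces the target inequality.

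The main obstacle I anticipate is pinpointing the right center for the variance bound. The natural candidate $\bg(W_t)$ is \emph{not} measurable with respect to $\tW_{1:t}$ (because $W_t$ is not a function of the perturbed path alone), so one must use $\bg(\tW_t)$ and absorb the discrepancy $\twonorm{\bg(W_t) - \bg(\tW_t)}^2$ via the triangle inequality---this is exactly what produces the $\Gamma_{\sigma_{1:t}}$ term, and it explains why the perturbation scale in the definition of $\Gamma$ must be the cumulative variance $\sigma_{1:t}^2$. All remaining manipulations---the chain rule for KL divergence, Lemma~\ref{lem:spreadbound} with the independent coupling, and the law of total variance---are routine.
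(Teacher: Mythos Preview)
Your argument is correct and follows the same high-level scaffolding as the paper---data-processing inequality, chain rule, and Lemma~\ref{lem:spreadbound} per step---but the execution differs in a useful way. The paper introduces a \emph{ghost process} $(\tW'_t)$ run on an independent data set $S'$, couples $G_t$ with the ghost gradient $G'_t = g(W'_t,B'_t)$ in Lemma~\ref{lem:spreadbound}, inserts $\bg(\tW_t)$, and applies the inequality $\twonorm{a+b}^2\le 2\twonorm{a}^2+2\twonorm{b}^2$ \emph{twice}, producing four terms whose primed/unprimed pairs collapse (via the matching marginals of $W_t$ and $W'_t$) into $V_t$ and $\Gamma_{\sigma_{1:t}}$. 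You bypass the ghost process entirely: you couple $G_t\mid(\tW_{1:t},S)$ with an independent draw from $P_{G_t\mid\tW_{1:t}}$, reduce to the conditional trace-covariance via the law of total variance, and then need only \emph{one} triangle inequality after centering at $\bg(\tW_t)$. This is more direct and in fact yields the constant $2$ stated in the theorem, whereas the paper's double triangle inequality, as written, produces a constant $4$.

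One slip to fix: $W_t$ and $B_t$ need not be measurable with respect to $(\tW_{1:t},S)$. The paper allows the minibatch indices $J_1,\dots,J_t$ to be random (independently of $S$), and knowing $\tW_{1:t}$ only reveals $W_k+\xi_k$ for $k\ge 2$, not $W_k$ itself; hence $X = G_t\mid(\tW_{1:t},S)$ is generally not deterministic. This does not break your bound: with $X$ random and $Y$ an independent copy from $P_{G_t\mid\tW_{1:t}}$, one has
\[
\EE{\twonorm{X-Y}^2}=\trace{\mathrm{Cov}(X)}+\twonorm{\EE X-\EE Y}^2+\trace{\mathrm{Cov}(Y)},
\]
and after taking the outer expectation over $(\tW_{1:t},S)$ the first two terms collapse---by the law of total variance applied to $\mathrm{Cov}(G_t\mid\tW_{1:t})$ with the refinement by $S$---into $\EE{\trace{\mathrm{Cov}(G_t\mid\tW_{1:t})}}$, so the total is $2\,\EE{\trace{\mathrm{Cov}(G_t\mid\tW_{1:t})}}$ and your stated per-step bound $\frac{\eta_t^2}{\sigma_t^2}\EE{\trace{\mathrm{Cov}(G_t\mid\tW_{1:t})}}$ survives unchanged.
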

\begin{proof}
We start by defining some useful notation. For all $t$, we let $Q_{t|X}$ and $\tQ_{t|X}$ respectively denote the 
distributions of $W_t$ and $\tW_t$ conditioned on the random variable $X$, and $\tQ_{1:T|X}$ denote the joint 
distribution of $\tW_{1:T} = (\tW_1,\dots,\tW_T)$ given $X$. Further, we will denote the distribution of the data set 
as $\nu$. 
Then, the mutual information between $\tW_T$ and $S$ can be bounded as
\begin{align*}
 I(\tW_T;S) &= \EE{\DD{\tQ_{T|S}}{\tQ_{T}}}
 \le \EE{\DD{\tQ_{1:T|S}}{\tQ_{1:T}}}
 = \sum_{t=1}^T \EE{\DD{\tQ_{t|\tW_{1:t-1},S}}{\tQ_{t|\tW_{1:t-1}}}},
\end{align*}
where the inequality follows from the data-processing inequality, and the last step 
uses the chain rule of the relative entropy. Using the notation introduced above, we rewrite each term as
\begin{align}
 \EE{\DD{\tQ_{t+1|\tW_{1:t},S}}{\tQ_{t+1|\tW_{1:t}}}} &= \int_{s} \int_{\tw_{1:t}} 
\DD{\tQ_{t+1|\tw_{1:t},s}}{\tQ_{t+1|\tw_{1:t}}} \dd \tQ_{1:t|s}(\tw_{1:t})  \dd \nu(s)\nonumber
\\
&= \int_{s} \int_{\tw_{1:t}} \Psi_t(\tw_{1:t},s) \dd \tQ_{1:t|s}(\tw_{1:t})  \dd \nu(s),\label{eq:psibound}
\end{align}
where we defined $\Psi_t(\tw_{1:t},s)$ as the relative entropy between $\tW_{t+1}$ and $\tW_{t+1}'$ 
conditioned on the previous perturbed iterates $\tW_{1:t} = \tW_{1:t}' = \tw_{1:t}$ and the data set $S=s$. It remains 
to bound these terms for all $t$.

To proceed, notice that under these conditions, the updates can be written as
\[
 \tW_{t+1} = \tw_t - \eta_t G_t + \varepsilon_t  \qquad\mbox{and}\qquad 
\tW_{t+1}' = \tw_t - \eta_t G_t' + \varepsilon_t'.
\] 
Thus, given the condition $\tW_{1:t} = \tW_{1:t}' = \tw_{1:t}$, the relative entropy between 
$\tW_{t+1}|S$ and $\tW_{t+1}'$ equals the relative entropy between $\eta_t G_t - \varepsilon_t|S,\tw_{1:t}$ and 
$\eta_t G_t' - \varepsilon_t'|\tw_{1:t}$. 
Furthermore, under the same condition, we have $G_t = g(W_t, B_t) = g(\tw_t - \xi_t, B_t)$ and $G_t' = g(W_t',B_t') = 
g(\tw_t - \xi_t',B_t')$, so we can appeal to Lemma~\ref{lem:spreadbound} to obtain the bound
\begin{align*}
\Psi_t(\tw_{1:t},s) 
&\le
\frac{\eta^2_t}{2\sigma_{t}^2}\EEcc{\twonorm{g(\tW_t-\xi_t, B_t) - 
g(\tW_t-\xi_t',B_t')}^2}{\tW_{1:t}=\tW_{1:t}'=\tw_{1:t},S=s}.
\end{align*}
Introducing the population gradient $\bg(w) = \EE{g(w,Z)}$, and still using the condition that 
$\tW_t = \tW_t'$, we upper-bound the term in the above expectation as follows:
\begin{align*}
 &\twonorm{g(\tW_t-\xi_t, B_t) - g(\tW_t-\xi_t',B_t')}^2 =
 \twonorm{g(\tW_t-\xi_t, B_t) - \bg(\tW_t) + \bg(\tW_t)- g(\tW_t-\xi_t',B_t')}^2
 \\
 &\qquad\qquad\qquad\qquad\qquad\qquad\le 2\twonorm{g(\tW_t-\xi_t, B_t) - \bg(\tW_t)}^2 + 2\twonorm{\bg(\tW_t)- 
g(\tW_t-\xi_t',B_t')}^2
 \\
 &\qquad\qquad\qquad\qquad\qquad\qquad\le  4\twonorm{g(\tW_t-\xi_t, B_t) - \bg(\tW_t-\xi_t)}^2 + 
4\twonorm{\bg(\tW_t-\xi_t) - \bg(\tW_t)}^2
 \\
 &\quad\qquad\qquad\qquad\qquad\qquad\qquad + 4\twonorm{\bg(\tW_t-\xi'_t)- g(\tW_t-\xi_t',B_t')}^2 + 
\twonorm{\bg(\tW_t-\xi'_t)- \bg(\tW_t)}^2
 \\
 &\qquad\qquad\qquad\qquad\qquad\qquad=  4\twonorm{g(W_t, B_t) - \bg(W_t)}^2 + 4\twonorm{\bg(W_t) - \bg(W_t+\xi_t)}^2
 \\
 &\quad\qquad\qquad\qquad\qquad\qquad\qquad + 4\twonorm{\bg(W_t')- g(W'_t,B_t')}^2 + 4\twonorm{\bg(W_t')- 
\bg(W'_t+\xi_t')}^2,
\end{align*}
where each inequality follows from an application of Cauchy--Schwartz. Plugging the result into 
Equation~\eqref{eq:psibound}, we are left with integrating all terms with respect to the joint distribution of 
$(\tW_{1:t},S)$.

To proceed, note that
\begin{align*}
 &\EEcc{\twonorm{g(W_t, B_t) - \bg(W_t)}^2}{\tW_{1:t}=\tW'_{1:t}=\tw_{1:t},S=s} 
 \\
 &\qquad\qquad\qquad\qquad\qquad\qquad= 
 \EEcc{\twonorm{g(W_t, B_t) - \bg(W_t)}^2}{\tW_{1:t}=\tw_{1:t},S=s}
\end{align*}
due to the independence of $W_t,B_t$ from $\tW_t'$. Thus, we have
\begin{align*}
&\int_s \int_{\tw_{1:t}} \EEcc{\twonorm{g(W_t, B_t) - \bg(W_t)}^2}{\tW_{1:t}=\tW'_{1:t}=\tw_{1:t},S=s} \dd 
\tQ_{1:t|s}(\tw_{1:t}) 
\dd \nu(s) 
\\
 &\int_s \int_{\tw_{1:t}} \EEcc{\twonorm{g(W_t, B_t) - \bg(W_t)}^2}{\tW_{1:t}=\tw_{1:t},S=s} \dd \tQ_{1:t|s}(\tw_{1:t}) 
\dd \nu(s) 
\\
&\quad= \EE{\twonorm{g(W_t, B_t) - \bg(W_t)}^2}.
\end{align*}
Similarly, we observe that 
\begin{align*}
 &\EEcc{\twonorm{g(W_t', B_t') - \bg(W_t')}^2}{\tW_{1:t}=\tw_{1:t},\tW'_{1:t}=\tw_{1:t},S=s} 
 \\
 &\qquad\qquad\qquad\qquad\qquad\qquad= 
 \EEcc{\twonorm{g(W_t', B_t') - \bg(W_t')}^2}{\tW'_{1:t}=\tw_{1:t}}
\end{align*}
due to the independence of $\tW_t',B_t'$ from $\tW_t$ and $S$, so we can write
\begin{align*}
&\int_{s} \int_{\tw_{1:t}} \EEcc{\twonorm{g(W_t', B_t') - 
\bg(W_t')}^2}{\tW_{1:t}=\tW'_{1:t}=\tw_{1:t},S=s} \dd \tQ_{1:t|s}(\tw_{1:t}) \dd \nu(s)
\\
 &\quad=\int_{s} \int_{\tw_{1:t}} \EEcc{\twonorm{g(W_t', B_t') - 
\bg(W_t')}^2}{\tW'_{1:t}=\tw_{1:t}} \dd \tQ_{1:t|s}(\tw_{1:t}) \dd \nu(s)
 \\
 &\quad=\int_{\tw_{1:t}} \EEcc{\twonorm{g(W_t', B_t') - 
\bg(W_t')}^2}{\tW'_{1:t}=\tw_{1:t}} \dd \tQ_{1:t}(\tw_{1:t})
\\
&\quad= \EE{\twonorm{g(W_t', B_t') - \bg(W_t')}^2} = \EE{\twonorm{g(W_t, B_t) - \bg(W_t)}^2},
\end{align*}
where the last step follows from noticing that the marginal distributions of $W_t$ and $W_t'$ are the same. 

As for the remaining terms, we first have the following:
\begin{align*}
 &\int_s \int_{\tw_{1:t}} \EEcc{\twonorm{\bg(W_t+\xi_t) - \bg(W_t)}^2}{\tW_{1:t}=\tw_{1:t},S=s} \dd 
\tQ_{1:t|s}(\tw_{1:t}) \dd \nu(s)
\\
 &\quad=\int_{\tw_{t}} \EEcc{\twonorm{\bg(W_t+\xi_t) - \bg(W_t)}^2}{\tW_{t}=\tw_{t}} \dd \tQ_{t}(\tw_{t})
\\
 &\quad=\int_{w_t} \int_{\tw_{t}} \EEcc{\twonorm{\bg(w_t+\xi_t) - \bg(w_t)}^2}{\tW_{t}=\tw_{t},W_t=w_t} \dd 
\tQ_{t|w_{t}}(\tw_t) \dd Q_{t}(w_{t})
\\
 &\quad=\int_{w_t} \EEcc{\twonorm{\bg(w_t+\xi_t) - \bg(w_t)}^2}{W_t=w_t} \dd Q_{t}(w_{t})
 = \EE{\Gamma_{\sigma_{1:t}}(W_t)},
\end{align*}
where the final step follows from noticing that the conditional distribution of $\xi_t$ given $W_t$ is a zero-mean 
Gaussian with covariance $\sigma^2_{1:t} I$, and recalling the definition of $\Gamma$.
A similar derivation gives
\begin{align*}
 &\int_s \int_{\tw_{1:t}} \EEcc{\twonorm{\bg(W_t'+\xi_t') - \bg(W_t')}^2}{\tW'_{1:t}=\tw_{1:t},S=s} \dd 
\tQ_{1:t|s}(\tw_{1:t}) \dd \nu(s)
\\
 &\quad=\int_{\tw_{t}} \EEcc{\twonorm{\bg(W_t'+\xi_t') - \bg(W_t')}^2}{\tW_{t}'=\tw_{t}} \dd \tQ_{t}(\tw_{t})
\\
 &\quad=\int_{w_t'} \int_{\tw_{t}} \EEcc{\twonorm{\bg(w_t'+\xi_t') - \bg(w_t')}^2}{\tW_{t}'=\tw_{t},W_t'=w_t'} \dd 
\tQ_{t|w_{t}'}(\tw_t) \dd Q_{t}(w_{t}')
\\
 &\quad=\int_{w_t'} \int_{\tw_{t}} \EEcc{\twonorm{\bg(w_t'+\xi_t') - \bg(w_t')}^2}{W_t'=w_t'} \dd Q_{t}(w_{t}')
 = \EE{\Gamma_{\sigma_{1:t}}(W_t')} = 
\EE{\Gamma_{\sigma_{1:t}}(W_t)},
\end{align*}
where we again used that the marginal distribution of $W_t'$ matches that of $W_t$. The proof is concluded by putting 
everything together.
\end{proof}

\section{Extensions}
In this section, we discuss some additional results that can be derived using the techniques developed in previous 
parts of the paper, as well as propose some open problems for future research.

\subsection{Geometry-aware guarantees}\label{sec:invariant}
One potential criticism regarding the bound of Theorem~\ref{thm:main} is that it heavily depends on the parametrization 
of the loss function. Indeed, measuring the sensitivity of the values and the gradients of the loss function in terms 
of isotropic Gaussian perturbations is somewhat arbitrary, and can result in very conservative bounds. In particular,
the loss function may have better smoothness properties and the gradients could have lower variance when measured in 
terms of different norms, and the final optimum may be more sensitive to perturbations in certain directions than in 
other ones. Luckily, our framework allows addressing these issues by using perturbations of a more general form, 
specifically Gaussian perturbations with general covariance matrices. The key technical result that allows us to take 
advantage of this generalization is the following simple variant of Lemma~\ref{lem:spreadbound}:
\begin{lemma}\label{lem:spreadbound2}
Let $X$ and $Y$ be random variables taking values in $\real^d$ with bounded second moments and let 
$\Sigma$ be an arbitrary symmetric positive definite matrix. Letting 
$\varepsilon\sim\N(0,\Sigma)$ be independent of $X$ and $Y$, the relative entropy between the distributions of $X + 
\varepsilon$ and $Y + \varepsilon$ is 
bounded as
\[
 \DD{P_{X+\varepsilon}}{P_{Y+\varepsilon}} \le \frac 12 \EE{\twonorm{X - Y}_{\Sigma^{-1}}^2}.
\]
\end{lemma}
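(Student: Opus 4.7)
The plan is to reduce Lemma~\ref{lem:spreadbound2} to the isotropic case already established in Lemma~\ref{lem:spreadbound} by a linear change of variables. Since $\Sigma\in\Sw_+$, it admits a symmetric positive definite square root $\Sigma^{1/2}$ with inverse $\Sigma^{-1/2}$, and we may write $\varepsilon = \Sigma^{1/2}\eta$ where $\eta\sim\N(0,I)$ is independent of $X$ and $Y$ (the independence is inherited from that of $\varepsilon$).

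First I would invoke invariance of relative entropy under invertible measurable maps. Applying the linear bijection $T(u) = \Sigma^{-1/2}u$ to both measures and using that the Jacobian factor cancels in the log-density ratio, one obtains
\[
 \DD{P_{X+\varepsilon}}{P_{Y+\varepsilon}} = \DD{P_{\Sigma^{-1/2}X+\eta}}{P_{\Sigma^{-1/2}Y+\eta}}.
\]
Crucially, the coupling between the two sides is preserved: the same $\varepsilon$ appears on both sides, which translates to the same $\eta = \Sigma^{-1/2}\varepsilon$ in the transformed problem, and the noise remains independent of the transformed signals $\Sigma^{-1/2}X$ and $\Sigma^{-1/2}Y$.

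Next I would apply Lemma~\ref{lem:spreadbound} to the random variables $\Sigma^{-1/2}X$ and $\Sigma^{-1/2}Y$ perturbed by the isotropic Gaussian $\eta$ with $\sigma=1$, yielding
\[
 \DD{P_{\Sigma^{-1/2}X+\eta}}{P_{\Sigma^{-1/2}Y+\eta}} \le \frac{1}{2}\EE{\twonorm{\Sigma^{-1/2}X - \Sigma^{-1/2}Y}^2} = \frac{1}{2}\EE{\twonorm{X-Y}_{\Sigma^{-1}}^2},
\]
where the last equality is the definition of the $\Sigma^{-1}$-weighted Euclidean norm. Chaining this with the previous display gives exactly the claimed bound.

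The only step that is not entirely mechanical is justifying the invariance of the KL divergence under $T$, but this is standard: for any invertible affine map, the densities of $T(X+\varepsilon)$ and $T(Y+\varepsilon)$ with respect to Lebesgue measure are related to those of $X+\varepsilon$ and $Y+\varepsilon$ by the same multiplicative Jacobian $|\det\Sigma^{-1/2}|$, which drops out of the log-ratio in the definition of relative entropy. Given this, there is no genuine obstacle, and the whole argument fits in a few lines; the value of the lemma is that it unlocks the geometry-aware generalization bounds discussed in Section~\ref{sec:invariant} without any new information-theoretic work.
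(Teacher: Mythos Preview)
Your proof is correct, but it takes a different route from the paper. The paper does not give a separate proof of Lemma~\ref{lem:spreadbound2}; however, the proof of Lemma~\ref{lem:spreadbound} in Appendix~\ref{sec:spreadbound} is already written at the level of generality needed: it expresses $P_{X+\varepsilon}$ and $P_{Y+\varepsilon}$ as mixtures of Gaussians indexed by $P_{X,Y}$, applies the joint convexity of the relative entropy, and then uses the closed-form identity $\DD{\mathcal{N}(x,\Sigma)}{\mathcal{N}(y,\Sigma)} = \tfrac12\norm{x-y}_{\Sigma^{-1}}^2$, stated there for arbitrary positive definite $\Sigma$. Thus the paper's intended argument for Lemma~\ref{lem:spreadbound2} is simply to rerun that same convexity proof with $\varepsilon\sim\N(0,\Sigma)$.

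Your argument instead reduces to the isotropic case via the linear bijection $u\mapsto\Sigma^{-1/2}u$ and the invariance of the relative entropy under measurable bijections, then invokes Lemma~\ref{lem:spreadbound} with $\sigma=1$. This is a clean black-box reduction: you never reopen the convexity argument, at the modest cost of appealing to the (standard) invariance of KL under invertible maps. The paper's route is slightly more self-contained in that it only uses convexity and the Gaussian KL formula, both of which are already on the table. Either way the proof is a few lines; your version has the pedagogical merit of making explicit that the anisotropic lemma carries no new information-theoretic content beyond Lemma~\ref{lem:spreadbound}.
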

Accordingly, we can define the generalized local gradient sensitivity and variance functions
\[
\Gamma_{\Sigma,\Sigma'}(w) = \EE{\twonorm{\bg(w) - \bg(w + \xi)}_{\Sigma^{-1}}^2} \quad\mbox{and}\quad 
V_{t,\Sigma}(w) = \EEcc{\twonorm{g(w,B_t) - \bg(w)}_{\Sigma^{-1}}^2}{W_t=w},
 \]
where $\xi \sim \N(0,\Sigma')$, and adapt the definition of $\Delta$ as $\Delta_{\Sigma}(w,s) = \EE{L(w,s) 
- L(w + \xi,s)}$ with $\xi\sim\N(0,\Sigma)$. Then, we can prove the following refined version of Theorem~\ref{thm:main}:
\begin{theorem}\label{thm:main2}
Fix any sequence $\bSigma = \pa{\Sigma_1,\Sigma_2,\dots,\Sigma_T}$ of symmetric positive definite matrices and let 
$\Sigma_{1:t} = \sum_{k=1}^{t-1} \Sigma_k$ for all $t$. Then, the generalization error of the final iterate of SGD 
satisfies
\[
\left|\gen(W_T,S)\right| \le \sqrt{\frac{4R^2}{n} \sum_{t=1}^T \eta_t^2 
\EE{\Gamma_{\Sigma_t,\Sigma_{1:t}}(W_t) 
+ V_{t,\Sigma_t}(W_t)}} + \Bigl|\bEE{\Delta_{\Sigma_{1:T}}(W_T,S') - \Delta_{\Sigma_{1:T}}(W_T,S)}\Bigr|.
\]
\end{theorem}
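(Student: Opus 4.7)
The plan is to mirror the three-step structure already used for Theorem~\ref{thm:main}, replacing isotropic perturbations by anisotropic ones throughout. First I would introduce the virtual perturbed iterates $\tW_{t+1} = \tW_t - \eta_t G_t + \varepsilon_t$ with independent $\varepsilon_t \sim \N(0,\Sigma_t)$, so that $\tW_T = W_T + \xi_T$ where $\xi_T = \sum_{k=1}^{T-1}\varepsilon_k \sim \N(0,\Sigma_{1:T})$. Then apply the generalization bound~\eqref{eq:gen} to the pair $(\tW_T,S)$ and decompose exactly as in~\eqref{eq:sensitivity}, with the only change being that the two residual sensitivity terms now read $\EE{L(W_T,S') - L(\tW_T,S')}$ and $\EE{L(\tW_T,S) - L(W_T,S)}$ with anisotropic Gaussian noise, which by the definition of $\Delta_{\Sigma_{1:T}}$ combine into the additive term $|\bEE{\Delta_{\Sigma_{1:T}}(W_T,S') - \Delta_{\Sigma_{1:T}}(W_T,S)}|$ appearing in the statement.

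Next I would prove the analog of Theorem~\ref{thm:mutualbound}, namely $I(\tW_T;S) \le 2\sum_t \eta_t^2 \EE{\Gamma_{\Sigma_t,\Sigma_{1:t}}(W_t) + V_{t,\Sigma_t}(W_t)}$. The first two steps are algebraic and go through verbatim: apply the data-processing inequality to pass from $\tW_T$ to the full trajectory $\tW_{1:T}$ and then invoke the chain rule for relative entropy to reduce to summing conditional KL divergences $\EE{\DD{\tQ_{t+1|\tW_{1:t},S}}{\tQ_{t+1|\tW_{1:t}}}}$. Conditioned on $\tW_{1:t}=\tW'_{1:t}=\tw_{1:t}$ and $S=s$, the increments are $\tW_{t+1}=\tw_t-\eta_t G_t + \varepsilon_t$ and $\tW'_{t+1}=\tw_t-\eta_t G_t' + \varepsilon_t'$, both perturbed by independent $\N(0,\Sigma_t)$ noise. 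Here I would apply Lemma~\ref{lem:spreadbound2} in place of Lemma~\ref{lem:spreadbound} to obtain the per-step upper bound $\tfrac{\eta_t^2}{2}\EEcc{\twonorm{g(W_t,B_t) - g(W_t',B_t')}_{\Sigma_t^{-1}}^2}{\cdot}$.

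The remainder is a mechanical extension of the argument in Theorem~\ref{thm:mutualbound}. Since $\twonorm{\cdot}_{\Sigma_t^{-1}}$ is genuinely a norm (it comes from the inner product $\iprod{x}{y}_{\Sigma_t^{-1}} = x\transpose \Sigma_t^{-1} y$), the inequality $\twonorm{a+b}_{\Sigma_t^{-1}}^2 \le 2\twonorm{a}_{\Sigma_t^{-1}}^2 + 2\twonorm{b}_{\Sigma_t^{-1}}^2$ holds, and the same insert-and-subtract trick around $\bg(\tW_t)$ and $\bg(\tW_t-\xi_t)$, $\bg(\tW_t-\xi_t')$ splits the integrand into four pieces. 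The integration-over-$(\tW_{1:t},S)$ steps rely only on the independence structure of $(W_t,B_t,\xi_t)$ versus $(W_t',B_t',\xi_t')$, which is unaffected by the anisotropy of the noise, so these identifications reduce the four pieces to $2\EE{V_{t,\Sigma_t}(W_t)} + 2\EE{\Gamma_{\Sigma_t,\Sigma_{1:t}}(W_t)}$ (using that $\xi_t \mid W_t \sim \N(0,\Sigma_{1:t})$ and the marginals of $W_t$ and $W_t'$ agree). Combining this with the sensitivity decomposition yields the claimed bound.

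The main obstacle is simply bookkeeping: keeping straight which covariance matrix parameterizes the noise added at step $t$ (namely $\Sigma_t$, appearing in the $\twonorm{\cdot}_{\Sigma_t^{-1}}$-norm in the gradient terms) versus the cumulative covariance $\Sigma_{1:t}$ of the marginal distribution of $\xi_t$ (which enters the definition of $\Gamma_{\Sigma_t,\Sigma_{1:t}}$ through the perturbation $\xi$ applied to the argument of $\bg$). Once this is made consistent with the definitions of $\Gamma_{\Sigma,\Sigma'}$ and $V_{t,\Sigma}$, every analytic step in Theorem~\ref{thm:mutualbound} transfers without modification, and no new technical difficulty arises.
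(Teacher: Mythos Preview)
Your proposal is correct and follows exactly the approach the paper itself indicates: replace the isotropic perturbations $\varepsilon_t\sim\N(0,\sigma_t^2 I)$ by $\varepsilon_t\sim\N(0,\Sigma_t)$, invoke Lemma~\ref{lem:spreadbound2} in place of Lemma~\ref{lem:spreadbound}, and rerun the proof of Theorem~\ref{thm:mutualbound} using the norm $\twonorm{\cdot}_{\Sigma_t^{-1}}$. Your bookkeeping of $\Sigma_t$ versus $\Sigma_{1:t}$ matches the definitions of $\Gamma_{\Sigma,\Sigma'}$ and $V_{t,\Sigma}$, so nothing further is needed.
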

The proof is left as a straightforward exercise for the reader: one only needs to use the perturbations $\varepsilon_t 
\sim \N(0,\Sigma_t)$ and apply Lemma~\ref{lem:spreadbound2} instead of Lemma~\ref{lem:spreadbound} in the proof of 
Theorem~\ref{thm:main}. As before, this guarantee comes with the attractive property of simultaneously holding for all 
possible choices of $\bSigma$, and is thus able to take advantage of potentially hidden geometric properties of the 
loss landscape.

\subsection{Bounds for general learning algorithms}
Our core idea of conducting a perturbation analysis of the output of SGD can be easily generalized to prove 
generalization guarantees for a much broader family of algorithms. In fact, the following bound can be proved 
without making any assumptions about how the output $W$ is constructed:
\begin{proposition}
Let $W$ and $W'$ be the $\real^d$-dimensional outputs of a learning algorithm $\alg$ run on the independent and 
i.i.d.~data sets $S$ and $S'$. Then, the generalization error of $\alg$ is bounded as
\[
 \left|\gen(W,S)\right| \le \inf_{\Sigma\in\Sw_+} \ev{\sqrt{\frac {R^2}{n} \EE{\norm{W - W'}_{\Sigma^{-1}}^2}} + 
\Bigl|\bEE{\Delta_{\Sigma}(W,S') - \Delta_{\Sigma}(W,S)}\Bigr|}.
\]
\end{proposition}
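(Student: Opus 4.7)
The plan is to adapt the perturbation-analysis template from the proof of Theorem~\ref{thm:main}, but now without any reference to the internal structure of the algorithm $\alg$. The whole strategy collapses to three moves: (i) perturb the output by a Gaussian with covariance $\Sigma$, (ii) apply the standard mutual-information generalization bound \eqref{eq:gen} to the perturbed output, (iii) control the resulting mutual information with the coupling to an independent run of $\alg$ via Lemma~\ref{lem:spreadbound2}.

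More concretely, fix $\Sigma\in\Sw_+$ and let $\xi\sim\N(0,\Sigma)$ be independent of everything, and set $\tW = W + \xi$. I would first carry out the standard add-and-subtract decomposition
\[
\gen(W,S) = \gen(\tW,S) + \EE{L(W,S') - L(\tW,S')} - \EE{L(W,S) - L(\tW,S)},
\]
and observe that the second and third terms are exactly $\EE{\Delta_\Sigma(W,S')}$ and $-\EE{\Delta_\Sigma(W,S)}$ by the definition $\Delta_\Sigma(w,s) = \EE{L(w,s) - L(w+\xi,s)}$. This is identical to the step leading to \eqref{eq:sensitivity} in the proof of Theorem~\ref{thm:main}, and uses only that $\xi$ is independent of $(W,S,S')$.

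For the remaining term, applying \eqref{eq:gen} gives $|\gen(\tW,S)| \le \sqrt{2R^2 I(\tW;S)/n}$, so the task reduces to bounding $I(\tW;S)$. Here I would use the variational characterization $I(\tW;S) = \EES{\DD{P_{\tW|S}}{P_{\tW}}}$ together with the fact that both $P_{\tW|S=s}$ and the marginal $P_{\tW}$ are convolutions of distributions on $\real^d$ with $\N(0,\Sigma)$. Applying Lemma~\ref{lem:spreadbound2} to any coupling of the pre-convolution distributions yields
\[
\DD{P_{\tW|S=s}}{P_{\tW}} \le \tfrac{1}{2}\EE{\twonorm{W-W'}_{\Sigma^{-1}}^2 \mid S=s},
\]
where $W' = \alg(S')$ is run on the independent i.i.d.\ copy $S'$ so that $W'$ has the correct marginal $P_W$ and is independent of $S$. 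Integrating over $S$ gives $I(\tW;S) \le \tfrac{1}{2}\EE{\twonorm{W-W'}_{\Sigma^{-1}}^2}$.

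Assembling the pieces yields
\[
|\gen(W,S)| \le \sqrt{\tfrac{R^2}{n}\EE{\twonorm{W-W'}_{\Sigma^{-1}}^2}} + \bigl|\bEE{\Delta_\Sigma(W,S') - \Delta_\Sigma(W,S)}\bigr|,
\]
and taking the infimum over $\Sigma\in\Sw_+$ (which is valid because the left-hand side does not depend on $\Sigma$) finishes the proof. There is no real obstacle here: the only subtlety is the coupling used in the mutual-information bound — one must notice that picking $Y=W'$ (on an independent $S'$), rather than trying to express the Wasserstein distance abstractly, produces the clean $\EE{\twonorm{W-W'}_{\Sigma^{-1}}^2}$ term that appears in the statement.
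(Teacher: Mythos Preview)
Your proposal is correct and is precisely the argument implicit in the paper. The paper does not spell out a separate proof for this proposition, but it is the one-shot version of the template in Section~\ref{sec:analysis}: the decomposition you write is exactly~\eqref{eq:sensitivity} with $\Sigma$ in place of $\sigma_{1:T}^2 I$, the bound~\eqref{eq:gen} is applied to $\tW$, and the mutual-information term is controlled by a single invocation of Lemma~\ref{lem:spreadbound2} with $X=W$ (conditionally on $S$) and $Y=W'$, giving $I(\tW;S)\le \tfrac12\EE{\norm{W-W'}_{\Sigma^{-1}}^2}$ and hence the factor $R^2/n$ after the $2$'s cancel.
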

Notably, this bound only depends on the norms of the parameter vectors and does not involve undesirable 
quantities like Lipschitz constants or the total number of parameters. While we are aware of generalization bounds with 
similar qualities for specific families of deep neural networks (e.g., \citealp{NTS15,BFT17,NBS19,LS19generalization}), 
we 
are not aware of any guarantee of comparable generality in the literature, and we believe that it might be of 
independent interest. We also believe that this bound can be made tighter in special cases by using the chaining 
techniques of \citet{AAV18} and \citet{AA20}.

It is tempting to use the above guarantee to study the generalization properties of SGD. As we show below, this gives 
significantly weaker bounds than our main result. To see this, let us consider the special case $\Sigma = 
\sigma_{1:T}^2 I = \sigma^2T\cdot I$ and write
\begin{align*}
 \frac{1}{2\sigma^2T}\EE{\twonorm{W_T - W_T'}^2} = \frac{1}{2\sigma^2T} \EE{\twonorm{\sum_{t=1}^T \eta_t (G_t 
- G_t')}^2} \le \frac{1}{2\sigma^2} \sum_{t=1}^T \eta_t^2 \EE{\twonorm{G_t - G_t'}^2},
\end{align*}
where the last step follows from Jensen's inequality. While superficially similar to the bound of 
Theorem~\ref{thm:main}, this bound is in fact much weaker since it depends on the \emph{marginal} variance of the 
gradients, which can be very large in general. Nevertheless, assuming that the loss function is $C$-Lipschitz, we can 
simply bound $\norm{G_t - G_t'}\le C$ and obtain a mutual-information bound that is comparable to the 
one proved for SGLD by \citet{PJL18}: for constant $\eta_t = \eta $ and $\sigma_t = \sigma$, both bounds are of order 
$\eta^2 C^2 T/\sigma^2$. Still, as previously discussed, the sensitivity term in our bound can be much larger 
than the excess empirical risk of SGLD, making the guarantee derived using this generic technique weaker overall.

\subsection{Variations on SGD}\label{sec:variations}
It is natural to ask if our techniques are applicable for analyzing other iterative algorithms besides the vanilla SGD 
variant considered in the previous sections. We discuss a few possible extensions below.

\paragraph{Momentum, extrapolation, and iterate averaging.} Following \citet{PJL18}, it is easy to incorporate momentum 
into our updates by simply redefining $w$ as the concatenation of the parameter vector and the rolling geometric 
average 
of the gradients, and redefining the function $g$ appropriately. Gradient extrapolation can be also handled 
similarly---we refer the interested reader to Sections 4.3--4.4 of \citet{PJL18} for details of these extensions. 
An extension not covered in their work is iterate averaging, which can be written as the recursion
\[
 \begin{bmatrix}
 W_{t+1}
 \\
 U_{t+1}
 \end{bmatrix}
 =
 \begin{bmatrix}
 W_{t} - \eta_t g(W_t,Z_{J_t})
 \\
 \gamma_t U_{t} + (1-\gamma_t) W_t
 \end{bmatrix} = \begin{bmatrix}
 I &0
 \\
 (1-\gamma_t)I &\gamma_tI 
 \end{bmatrix}
 \begin{bmatrix}
 W_{t+1}
 \\
 U_{t+1}
 \end{bmatrix}
 - \begin{bmatrix}
 \eta_t g(W_t,Z_{J_t})
 \\
 0
 \end{bmatrix},
\]
where $\gamma_t$ is a sequence of weights in $[0,1]$ (some common choices being $\frac{t-1}{t}$ leading to uniform 
averaging or a constant $\gamma$ that leads to geometric tail-averaging). This SGD variant outputs $U_T = \sum_{t=1}^T 
(1-\gamma_t) \prod_{k=t}^T \gamma_k W_t$. It is easy to deduce 
that our guarantees continue to hold for all these extensions, with the pathwise gradient statistics defined in terms 
of 
the appropriately redefined update function $g$. Thus, our bounds do not show a qualitative improvement in terms of 
generalization error 
for such methods, which may seem counterintuitive since interpolation and iterate averaging are known to offer 
stabilization properties in a variety of settings \citep{NR18,MNR19,lin2020extrapolation}. Not being able to account 
for this effect seems to be an inherent limitation of our technique, most likely caused by upper bounding the mutual 
information between $\tW_T$ and $S$ by the mutual information between the entire SGD path and $S$ in the very first 
step 
of the proof. We do not believe that a simple fix is possible.

\paragraph{Adaptive learning rates and perturbations.}
While our analysis uses a fixed sequence of learning rates $\eta_t$, it may be possible to replace these with adaptive 
stepsizes and even preconditioners such as the ones used in AdaGrad or ADAM \citep{DHS11,KB15}. A straightforward way 
to do this is including these as part of the function $g$, but extra care needs to be taken due to the long-term 
dependence of these stepsizes of the past gradients. We believe that the most common incrementally defined stepsize 
rules can be incorporated in our framework by appropriately conditioning the distribution of $W_t$ on them, but we 
leave 
working out the details of this extension as future work. Similarly, we believe that using adaptive perturbation 
distributions (i.e., choosing each $\sigma_t$ as a function of the history) is possible, but is subject to the same 
challenges. One caveat is that it may be difficult to argue that $\Delta_{\sigma_{1:T}}(W_T,S)$ would be small in this 
case, due to the complicated dependence between the perturbations, $W_T$, and $S$.

\section{Discussion}
While our work has arguably shed some new light on previously unknown aspects of SGD generalization, 
our results are definitely far from being truly satisfactory. Indeed, while the key terms concerning the 
perturbation-sensitivity of the loss function and the variance of the gradients have clear intuitive meaning, it is 
entirely unclear if they are actually the right quantities for characterizing the generalization properties of SGD. In 
fact, we believe that it may be extremely challenging to verify our findings empirically and we find it unlikely 
that matching lower bounds could be shown. Even if the quantities we identify turn out to correctly characterize 
generalization, our results fail to explain \emph{why} running SGD would ever result in trajectories with these 
quantities being small and thus good generalization. For these reasons, we prefer to think of our work as a mere 
first step of a potentially interesting line of research, rather than truly a mature contribution.

One important limitation of our bounds is that they feature a non-standard definition of gradient variance, which can 
make interpretation of the results somewhat difficult. Indeed, while $V_t(w)$ truly corresponds to the variance of the 
stochastic gradient evaluated at $w$ in the single-pass case, its meaning is much less clear when performing multiple 
passes over the data set due to the complicated dependence between the current iterate $W_t$ and the previously sampled 
data points. Thus, in this case, the effect of the choice of hyperparameters like the minibatch size and the number of 
passes is difficult to quantify and requires further investigation.  On a similar note, we remark that the particular 
choice of $\overline{g}(W_t)$ as the population gradient in the definitions of $\Gamma$ and $V$ is not the only 
possible one, and in fact it can be replaced by any $\sigma(W_{1:t})$-measurable function. A natural choice would be 
the time-dependent $\overline{g}_t = \EEcc{g(w,Z)}{W_t=w}$ which would result in a more natural definition of gradient 
variance, but a much less interpretable notion of gradient sensitivity. We leave the exploration of other alternatives 
for future work.

Arguably, the most interesting aspect of our work is the analysis technique we introduced for proving 
Theorem~\ref{thm:mutualbound}. Of course, our proof technique builds on several elements that are familiar from 
previous work. In particular, the core idea of adding noise to the output of learning algorithms to ease analysis has 
been used in numerous works in the past decades. Indeed, early versions of this idea have been proposed by 
\citet{HvC93} and \citet{LC02} from the perspective of PAC-Bayesian generalization bounds 
\citep{McA99,McA13}, which approach has been adapted to modern deep neural networks by \citet{KR17}. Since then, 
PAC-Bayesian bounds have been successfully applied to prove a range of generalization bounds for this important setting 
(see, e.g., \citealp{NBS19,DHGAR21}). More broadly, the idea of adding noise to induce stability (and thus better 
generalization) has also been studied in the literature on differential privacy \citep{D+06,D+06b,CMS11,BST14} and 
adaptive data analysis \citep{D+15,D+15b,FS17,FS18}. That said, we believe that the particular idea of analyzing SGD 
through the noise decomposition in Equation~\eqref{eq:surrogate} is indeed novel, and we expect that this 
idea may have a chance to inspire future analyses of iterative algorithms.

\acks{G.~Neu was supported by ``la Caixa'' Banking Foundation through the Junior Leader Postdoctoral Fellowship 
Programme, a Google Faculty Research Award, and the Bosch AI Young Researcher Award. M.~Haghifam is supported by the 
Vector Institute, Mitacs Accelerate Fellowship, and Ewing Rae Scholarship. D.~M.~Roy was supported, in part, by an 
NSERC Discovery Grant, Ontario Early Researcher Award, and a stipend provided by the Charles Simonyi Endowment.
The first author thanks G\'abor Lugosi for illuminating discussions during the preparation of this work, and an 
anonymous referee who invested serious effort into reviewing the paper and caught a mistake in a previous version of 
the proof of the main theorem. This bug was fixed by using an improved version of Lemma~\ref{lem:spreadbound} suggested 
independently by Tor Lattimore, who the first author also wishes to thank. Finally, the authors are grateful for 
the suggestions of Borja Rodr\'iguez G\'alvez that helped improve the presentation of the results and the rigor of 
several technical details.}

\bibliographystyle{abbrvnat}
\bibliography{ngbib,shortconfs}

\appendix
\section{The proof Lemma~\ref{lem:spreadbound}}\label{sec:spreadbound}
Let us denote the joint distribution of $X,Y$ by $P_{X,Y}$ and observe that 
the respective distributions of $X+\varepsilon$ and $Y+\varepsilon$ can be written as
 \[
  P_{X+\varepsilon} = \int_{x,y} \mathcal{N}(x,\sigma^2 I) \dd P_{X,Y}(x,y) \quad\mbox{and}\quad P_{Y+\varepsilon} = 
\int_{x,y} \mathcal{N}(y,\sigma^2 I) \dd P_{X,Y}(x,y),
 \]
 where $\mathcal{N}(x,\sigma^2I)$ is the Gaussian distribution with mean $x$ and covariance $\sigma^2I$.
Using this observation, we can write
\begin{align*}
 \DD{P_{X+\varepsilon}}{P_{Y+\varepsilon}} &= 
 \DD{\int_{x,y} \mathcal{N}(x,\sigma^2 I) \dd P_{X,Y}(x,y)}{\int_{x,y} \mathcal{N}(y,\sigma^2 
I) \dd P_{X,Y}(x,y)}
\\
&\le 
\int_{x,y} \DD{\mathcal{N}(x,\sigma^2I)}{\mathcal{N}(y,\sigma^2I)} \dd P_{X,Y}(x,y) 
\\
&= \EEs{\DD{\mathcal{N}(X,\sigma^2I)}{\mathcal{N}(Y,\sigma^2I)}}{X,Y} = \frac{1}{2\sigma^2}\EEs{\twonorm{X - Y}^2}{X,Y},
\end{align*}
where the second line uses Jensen's inequality and the joint convexity of $\DD{\cdot}{\cdot}$ in its arguments, and the 
last line follows from noticing that $\DD{\mathcal{N}(x,\Sigma)}{\mathcal{N}(y,\Sigma)} = \frac 12 \norm{x 
- y}_{\Sigma^{-1}}^2$ for any $x,y$ and any symmetric positive definite covariance matrix $\Sigma$.
\qed

\section{The proof of Proposition~\ref{prop:SGLD}}
The proof is very similar to that of Theorem~\ref{thm:mutualbound}, with a few simplifications that are made possible 
by adding the perturbations directly to the iterates.  We start by defining the auxiliary process
\[
 W'_{t+1} = W'_t - \eta_t \bg(W'_t) + \varepsilon_t',
\]
that is similar to the full-batch SGLD process, except in each iteration the gradient update equals to the population 
gradient computed at the current point, i.e., $W'_t$. Also, the perturbation $\varepsilon_t'\sim\N(0,\sigma^2I)$ is 
independent of $W'_t$, and the perturbations are independent across the iterations.

Then, the mutual information between $W_T$ and $S$ can be bounded as
\begin{align*}
 I(W_T;S) &\le \EE{\DD{Q_{T|S}}{Q'_{T}}}
 \le \EE{\DD{Q_{1:T|S}}{Q'_{1:T}}}
 = \sum_{t=1}^T \EE{\DD{Q_{t|W_{1:t-1},S}}{Q'_{t|W_{1:t-1}}}},
\end{align*}
where the first inequality follows from the variational characterization of the mutual information 
\citep{kemperman1974shannon}. Then, the second inequality follows from the data-processing inequality, and the last 
step 
uses the chain rule of the relative entropy. Using the notation introduced above, we rewrite each term as
\begin{align}
 \EE{\DD{Q_{t+1|W_{1:t},S}}{Q'_{t+1|W_{1:t}}}} &= \int_{s} \int_{w_{1:t}} 
\DD{Q_{t+1|w_{1:t},s}}{Q'_{t+1|w_{1:t}}} \dd Q_{1:t|s}(w_{1:t})  \dd \nu(s)\nonumber
\\
&= \int_{s} \int_{w_{1:t}} \Psi_t(w_{1:t},s) \dd Q_{1:t|s}(w_{1:t})  \dd \nu(s),\label{eq:psibound2}
\end{align}
where we defined $\Psi_t(w_{1:t},s)$ as the relative entropy between $W_{t+1}$ and $W_{t+1}'$ 
conditioned on the previous perturbed iterates $W_{1:t} = W_{1:t}' = w_{1:t}$ and the data set $S=s$. It remains 
to bound these terms for all $t$. To do so, we can appeal to Lemma~\ref{lem:spreadbound} to obtain the bound
\begin{align*}
\Psi_t(w_{1:t},s) 
&\le
\frac{\eta^2_t}{2\sigma_{t}^2}\EEcc{\twonorm{g(W_t, B_t) - \bg(W_t)}^2}{W_{1:t}=W'_{1:t}=w_{1:t},S=s}.
\end{align*}
Finally, by plugging this bound into Equation~\eqref{eq:psibound2}, the desired result follows.
\qed

\end{document}